\documentclass{article} 
\usepackage{xxxx_conference,times}

\usepackage{hyperref}
\usepackage{url}

\usepackage[utf8]{inputenc} 
\usepackage[T1]{fontenc}    
\usepackage{amsthm} 

\newtheorem{theorem}{Theorem}

\usepackage{booktabs}       
\usepackage{amsfonts}       
\usepackage{nicefrac}       
\usepackage{microtype}      
\usepackage{xcolor}         

\usepackage{amsmath}

\usepackage{lipsum}
\usepackage{multirow}
\usepackage{stmaryrd}

\usepackage{multirow}
\usepackage{bbm}
\usepackage{adjustbox}
\usepackage{xspace}

\usepackage{graphicx}
\usepackage{caption}
\usepackage{subcaption}
\usepackage{color, colortbl}
\usepackage{tablefootnote}
\usepackage{enumitem}  
\usepackage{lipsum}
\usepackage{xspace}

\usepackage{makecell}
\usepackage{algorithmic}
\usepackage{algorithm}
\usepackage{footnote}
\makesavenoteenv{tabular}
\usepackage{wrapfig}   
\usepackage{utfsym}
\usepackage{tabularx} 
\usepackage[breakable]{tcolorbox} 
\usepackage{listings}             

\newcommand{\mdp}{\mathcal{M}}
\newcommand{\sspace}{\mathcal{S}}
\newcommand{\aspace}{\mathcal{A}}
\newcommand{\saspace}{\mathcal{Z}}
\newcommand{\pispace}{\Pi}
\newcommand{\rawrew}{\mathcal{R}}
\newcommand{\rawerew}{\mathbf{r}}
\newcommand{\rawtrns}{P}
\newcommand{\rew}{\rawrew}
\newcommand{\erew}{\rawerew}
\newcommand{\trns}{\rawtrns}

\newcommand{\erewm}{\rawerew_{\mdp}}
\newcommand{\trnsm}{\rawtrns_{\mdp}}
\newcommand{\val}{\mathbf{v}}
\newcommand{\qval}{\mathbf{q}}

\newcommand{\act}{A}
\newcommand{\actpi}{\act^\pi}

\newcommand{\valpi}{\val^{\pi}}

\newcommand{\bell}{\mathcal{B}}

\newcommand{\optp}{\pi^{*}}

\newcommand{\dist}{{Dist}}

\newcommand{\E}{\mathbb{E}}

\setlist{topsep=1pt,itemsep=1pt,partopsep=1pt, parsep=1pt}

\title{From Reward Shaping to Q-Shaping: Achieving Unbiased Learning with LLM-Guided Knowledge}
\author{Xiefeng Wu \\
Wuhan University \\
\texttt{wuxiefeng@whu.edu.cn} \\
}

\iclrfinalcopy 
\begin{document}

\maketitle

\begin{abstract}
Q-shaping is an extension of Q-value initialization and serves as an alternative to reward shaping for incorporating domain knowledge to accelerate agent training, thereby improving sample efficiency by directly shaping Q-values. This approach is both general and robust across diverse tasks, allowing for immediate impact assessment while guaranteeing optimality. We evaluated Q-shaping across 20 different environments using a large language model (LLM) as the heuristic provider. The results demonstrate that Q-shaping significantly enhances sample efficiency, achieving a \textbf{16.87\%} improvement over the best baseline in each environment and a \textbf{253.80\%} improvement compared to LLM-based reward shaping methods. These findings establish Q-shaping as a superior and unbiased alternative to conventional reward shaping in reinforcement learning.
\end{abstract}

\section{Introduction} \label{sec:intro}

Reinforcement learning (RL) can solve complex tasks but often faces sample inefficiency. For example, AlphaGo~\citep{silver2016mastering_alpha_go} required approximately 4 weeks of training on 50 GPUs, learning from 30 million expert Go game positions to reach a 57\% accuracy. Similarly, training a real bipedal soccer robot required \(9.0 \times 10^8\) environment steps, amounting to 68 hours of wall-clock time for the full 1v1 agent~\citep{haarnoja2024learning_google_soccer}. These cases demonstrate the significant computational demands of RL.

To improve efficiency, popular methods include (1) imitation learning, (2) residual reinforcement learning, (3) reward shaping, and (4) Q-value initialization. Yet, each has limitations: imitation learning requires expert data, residual RL needs a well-designed controller, and Q-value initialization demands precise estimates. Therefore, reward shaping is the most practical approach, as it avoids the need for expert trajectories or predefined controllers.
    \vspace{-5pt}
\begin{figure}[h]
    \centering
    \includegraphics[width=\linewidth]{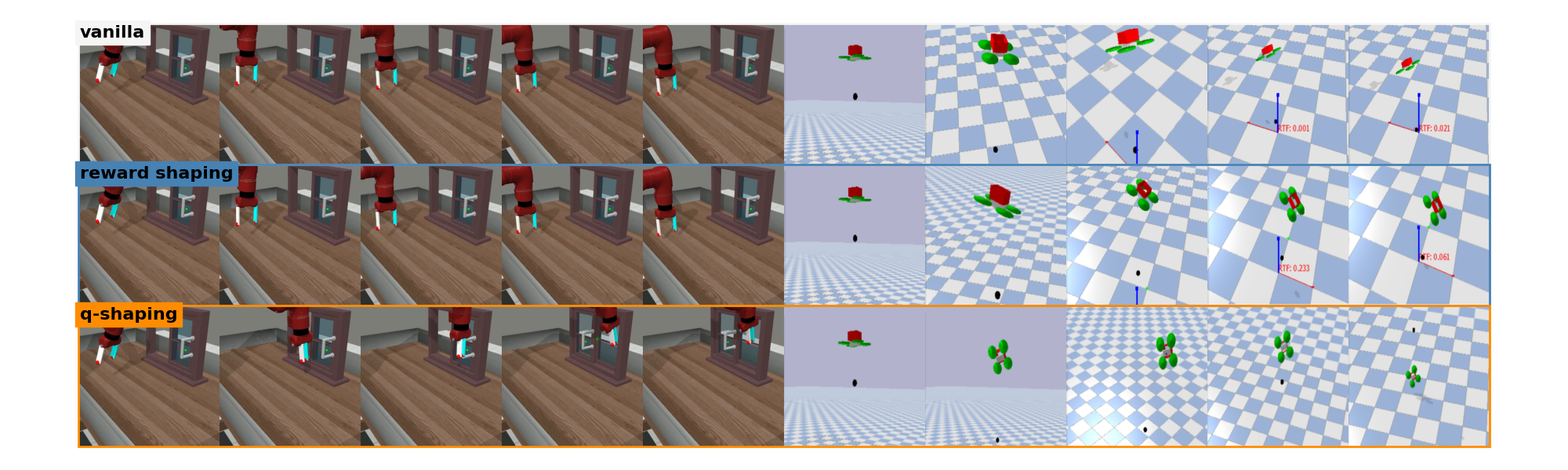}
    \caption{Agent behavior across different algorithms. Q-shaping impacts agent behavior quickly, enabling rapid evolution and improvement in the quality of heuristic functions. Vanilla refers to traditional RL algorithms, while reward shaping-enhanced RL algorithms cannot immediately impact agent behavior and have a slow verification period.}

    \vspace{-5pt}
    \label{fig:core_meaning_q_shaping}
\end{figure}

Reward shaping methods fall into two main categories: (1) potential-based reward shaping (PBRS)~\citep{rewardshaping_ang_wu} and (2) non-potential-based reward shaping (NPBRS). PBRS provides state-based heuristic rewards, while NPBRS extends to state-action pairs but lacks guarantees of optimality. Additionally, reward shaping methods often suffer from a slow verification process, requiring completion of training to assess the impact of the heuristic reward, which limits their development, as noted by Ma et al. (2023). Lastly, designing high-quality reward functions remains a challenging and often frustrating task for researchers, hindering the adoption of these methods \citep{ma2023eureka}.

With the growing popularity of large language models (LLMs), LLM-guided reinforcement learning (RL) has emerged as a promising field. This approach leverages the strong understanding capabilities of LLMs to guide RL agents in exploration or policy updates. Existing research has focused on two main areas: LLM-based policy generation and LLM-guided reward design. For example, \cite{chen2021decision, micheli2022transformers} utilize LLMs to enhance policy decisions, while \cite{kwon2023reward, carta2023grounding, ma2023eureka} employ LLMs to design reward structures. Although these works have improved task success rates, the challenges associated with reward shaping remain unresolved.

In this work, we introduce a novel framework, Q-shaping, which leverages domain knowledge from large language models (LLMs) to guide agent exploration. Unlike reward shaping, Q-shaping extends Q-value initialization by directly modifying Q-values at any training step without affecting the agent's optimality upon convergence. More importantly, Q-shaping enables rapid verification of heuristic guidance, allowing experimenters to refine the heuristic function efficiently. Additionally, Q-shaping is less dependent on the quality of the LLM, as the provided heuristic values do not alter the agent's optimality after convergence. Figure \ref{fig:core_meaning_q_shaping} illustrates the agent behavior across different algorithms.

In the "Q-shaping Framework" section, we provide a detailed analysis and supporting theorems demonstrating why Q-shaping preserves optimality and how imprecise Q-values can guide exploration to improve sample efficiency. In the experimental section, we employ GPT-4o as the heuristic provider and compare Q-shaping against popular baselines. The results indicate that Q-shaping achieved an average improvement of 16.87\% over the best baseline for each task across 20 different tasks. Additionally, we compare Q-shaping with LLM-guided reward shaping methods, such as T2R and Eureka, revealing that these methods experience a peak performance loss of 253.80\% in optimality compared to Q-shaping when aiming to improve task success rates.

\section{Related Work} \label{sec:related_work}

\subsection{Heuristic Reinforcement Learning}
There are four common approaches to incorporating domain knowledge into reinforcement learning to enhance sample efficiency: (1) Imitation Learning, (2) Residual Policy, (3) Reward Shaping, and (4) Q-value Initialization.

Imitation Learning requires access to expert trajectories, as demonstrated by works such as GAIL~\citep{ho2016generative_gail}, where agents learn by mimicking expert behavior. However, the reliance on high-quality expert data limits its applicability in complex tasks. Residual Policy~\citep{johannink2019residual} methods involve designing a controller to guide agent actions, but this manual design process restricts their scalability and generality.

Q-value initialization, although promising, often requires precise Q-value estimates to derive an effective policy. For instance, Cal-QL~\citep{nakamoto2024cal_ql} employs calibrated Q-values to enhance agent exploration, but these calibrated values still rely on expert knowledge, making Q-value design more challenging than reward shaping. Consequently, few studies have pursued this direction due to the inherent difficulty in obtaining accurate Q-values compared to reward shaping.

Reward shaping directly modifies the reward function to influence agent behavior, improving training efficiency without requiring expert trajectories or manual controller design. This approach has been refined to address diverse learning scenarios, such as in Inverse Reinforcement Learning (IRL) \citep{ziebart2008maximum, wulfmeier2015maximum, finn2016guided} and Preference-based RL \citep{christiano2017deep, ibarz2018reward, lee2021pebble, park2022surf}. Additionally, various heuristic techniques have been introduced, including unsupervised auxiliary task rewards \citep{jaderberg2016reinforcement}, count-based reward heuristics \citep{bellemare2016unifying, ostrovski2017count}, and self-supervised prediction error heuristics \citep{pathak2017curiosity, stadie2015incentivizing, oudeyer2007intrinsic}.

However, reward shaping often suffers from inaccuracies in the heuristic functions and a slow verification process, which limits its effectiveness in certain applications.

\subsection{LLM\textbackslash VLM Agent}
LLMs/VLMs can achieve few-shot or even zero-shot learning in various contexts, as demonstrated by works such as Voyager \citep{wang2023voyager}, ReAct \citep{yao2022react}, and SwiftSage \citep{lin2024swiftsage}.In the field of robotics, VIMA \cite{jiang2022vima} employs multimodal learning to enhance agents' comprehension capabilities. Additionally, the use of LLMs for high-level control is becoming a trend in control tasks \cite{shi2024yell,liu2023interactive,ouyang2024long}.In web search, interactive agents \cite{gur2023real,shaw2024pixels,zhou2023webarena} can be constructed using LLMs/VLMs. Moreover, frameworks have been developed to reduce the impact of hallucinations, such as decision reconsideration \citep{yao2024tree, long2023large}, self-correction \citep{shinn2023reflexion, kim2024language}, and observation summarization \citep{sridhar2023hierarchical}.
\subsection{LLM-enhanced RL}
Relying on the understanding and generation capabilities of large models, LLM-enhanced RL has become a popular field\cite{du2023guiding,carta2023grounding}.
Researchers have investigated the diverse roles of large models within reinforcement learning (RL) architectures, including their application in reward design \cite{kwon2023reward,wu2024read,carta2023grounding,chu2023accelerating,yu2023language,ma2023eureka},
information processing \cite{paischer2022history,paischer2024semantic,radford2021learning},
as a policy generator,
and as a generator within large language models (LLMs)\cite{chen2021decision,micheli2022transformers,robine2023transformer,chen2022transdreamer}.
While LLM-assisted reward design has improved task success rates~\citep{ma2023eureka,xietext2reward}, it often introduces bias into the original Markov Decision Process (MDP) or fails to provide sufficient guidance for complex tasks. Additionally, the verification process is time-consuming, which slows down the pace of iterative improvements.

\section{Notation}\label{sec:notation}

\paragraph{Markov Decision Processes.}
We represent the environment as a Markov Decision Process (MDP) in the standard form: \(\mdp := \langle \sspace, \aspace, \rew, \trns, \gamma, \rho \rangle\). Here, \(\sspace\) and \(\aspace\) denote the discrete state and action spaces, respectively. We use \(\saspace := \sspace \times \aspace\) as shorthand for the joint state-action space. The reward function \(\rew \colon \saspace \to \dist([0,1])\) maps state-action pairs to distributions over the unit interval, while the transition function \(\trns \colon \saspace \to \dist(\sspace)\) maps state-action pairs to distributions over subsequent states. Lastly, \(\rho \in \dist(\sspace)\) represents the distribution over initial states. We denote \(\erewm\) and \(\trnsm\) as the true reward and transition functions of the environment.

For policy definition, the space of all possible policies is denoted as \(\pispace\). A policy \(\pi: \mathcal{S} \rightarrow \Delta(\mathcal{A})\) defines a conditional distribution over actions given states. A deterministic policy \(\mu: \mathcal{S} \rightarrow \mathcal{A}\) is a special case of \(\pi\), where one action is selected per state with a probability of 1. We define the value function as \(v \colon \Pi \to \mathcal{S} \to \mathbb{R}\) or \(q \colon \Pi \to \mathcal{S} \times \mathcal{A} \to \mathbb{R}\), both with bounded outputs. The terms \(\qval\) and \(\val\) represent discrete matrix representations, where \(\val(s)\) and \(\qval(s, a)\) specifically denote the outputs of an arbitrary value function for a given policy at a particular state or state-action pair.

An \textit{optimal policy} for an MDP \(\mdp\), denoted by \(\optp_\mdp\), is one that maximizes the expected return under the initial state distribution: \(\optp_\mdp := \arg\max_{\pi} \E_{\rho} [\valpi_\mdp]\). The state-wise expected returns of this optimal policy are represented by \(\val_\mdp^{\optp_\mdp}\). The Bellman consistency equation for the MDP \(\mdp\) at \(\mathbf{x}\) is given by \(\bell_\mdp(\mathbf{x}) := \erew + \gamma \trns \mathbf{x}\). Notably, \((\valpi_\mdp)^*\) is the unique vector that satisfies \((\valpi_\mdp)^* = \actpi \bell_\mdp((\valpi_\mdp)^*)\).We abbreviate \(\qval^*\) as \(\bigl(\qval_{\mdp}^{\pi_\mdp^*} \bigr)^*\) and \(\qval_\xi^*\) as \(\bigl(\qval_{\xi}^{\pi_\xi^*} \bigr)^*\) for some MDP \(\xi\).

\paragraph{Datasets} 

We define fundamental concepts essential for fixed-dataset policy optimization. Let \( D := \{\langle s, a, r, s' \rangle \}^d \) represent a dataset of \( d \) transitions. From this dataset, we can construct a local MDP \(\mathcal{D}\) and derive a local optimal Q-value function, denoted as \( q^*_D \).

Within the Q-shaping framework, let $\hat{\qval}$ denote the Q-function learned from TD estimation and Q-shaping. The LLM outputs are categorized into two types: \textit{goodQ}, which encourages exploration, and \textit{badQ}, which discourages it. Let \( G_{LLM} := \{(s, a, Q) \mid Q > 0 \}^d \) represent the dataset of \( d \) heuristic pairs focused on encouraging agent exploration. Similarly, \( B_{LLM} := \{(s, a, Q) \mid Q \le 0 \}^d \) denotes the dataset of \( d \) heuristic pairs aimed at preventing exploration. The complete collection of LLM outputs is given by \( D_{LLM} := \{G_{LLM}, B_{LLM}\} \).

\paragraph{Convergence} An agent is considered to have converged when it reaches 80\% of the peak performance. The peak performance is defined as the highest performance achieved by any of the baseline methods.

\section{Q-shaping Framework}\label{sec:q-shaping}
In the Q-learning framework, an experience buffer \(D\) is used to store transitions from the Markov Decision Process (MDP), supporting both online and offline training. The TD-update method utilizes this experience buffer to estimate the Q-values for (s, a) pairs. The policy is then derived from the trained Q-function, which maximizes $\qval(s,\cdot)$. Thus, accurate Q-value estimation is crucial, as it determines policy quality and guides exploration. To facilitate better exploration, Q-shaping leverages both the experience buffer and a heuristic function provided by a large language model to estimate the Q-function. The general form of Q-shaping is given by:

\[
\hat{\qval}^{k+1}(s, a) \leftarrow \hat{\qval}^{k}(s, a) + \alpha \hat{\qval}^k_{TD}(s, a) + h(s, a), \quad (s, a, h(s, a)) \in D^k_{LLM},
\]

where $\hat{\qval}^k_{TD}(s,a)$ represents the temporal-difference (TD) update estimation of $\qval(s,a)$ at step $k$, expressed as: $\hat{\qval}^k_{TD}(s,a) = r(s,a,s')+ \gamma \hat{\qval}^k(s,a)$.
Here, $D^k_{LLM}$ denotes the set of \((s, a, Q)\) pairs provided by the LLM at iteration \( k \).

In the early stages of training, the convergence of the Q-function does not yield optimal performance, as the agent has yet to gather high-quality trajectories. Previous works, such as MCTS~\citep{browne2012survey_mcts} and SAC~\citep{sac}, have employed action-bonus heuristics to bias Q-values, thereby facilitating better exploration. While these methods may compromise the accuracy of Q-value estimation, they significantly enhance the agent's trajectory exploration in the short term. Our approach aligns with these action-bonus methods but leverages the LLM's understanding and thinking abilities to provide heuristic bonuses, resulting in a more informed exploration strategy.

\subsection{Unbiased Optimality}
The Q-value represents a high-level abstraction of both the environment and the agent's policy. It encapsulates key elements such as rewards $r$, transition probabilities $P$, states $s$, actions $a$, and the policy $\pi$, thereby integrating the environmental dynamics and the policy under evaluation.
Changes in any of these components directly influence the Q values associated with different actions. Specifically, the term $\mathbf{h}$ can take various forms, such as the entropy term used in SAC or the UCT heuristic term employed in MCTS and is utilized to shape the Q-values at each step.
Compared to these algorithms, the LLM-guided Q-shaping method provides heuristic guidance only at specific steps, ensuring that the final optimality of the Q-function remains unaffected. The converged shaped Q-function is thus equivalent to the locally optimal Q-function $\hat{\qval}$:

\begin{theorem}[Contraction and Equivalence of $\hat{\qval}$]
\label{theorem:Q_contraction_Q_equal}
Let $\hat{\qval}$ be a contraction mapping defined in the metrics space $(\mathcal{X},\|\cdot\|_{\infty})$, i.e,
$$\|   \bell_{\mathcal{D}}(\hat{\qval}) -\bell_{\mathcal{D}}(\hat{\qval}')  \|_{\infty} \le \gamma \|\hat{\qval} - \hat{\qval}' \|_{\infty}  $$, where $\bell_{\mathcal{D}}$ is the Bellman operator for the sampled MDP \(\mathcal{D}\) and \(\gamma\) is the discount factor.

Since both $\hat{\qval}$ and $\qval$ are updated on the same MDP,
we have the following equation:
$$\hat{\qval}_\mathcal{D}^{*} = \qval^{*}_\mathcal{D}$$
\label{thm:qhat_is_contraction}
\end{theorem}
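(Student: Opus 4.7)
The plan is to invoke the Banach fixed-point theorem on the complete metric space $(\mathcal{X}, \|\cdot\|_\infty)$. The hypothesis already gives us that $\bell_\mathcal{D}$ is a $\gamma$-contraction, so by Banach there is a unique $\qval^\dagger \in \mathcal{X}$ satisfying $\qval^\dagger = \bell_\mathcal{D}(\qval^\dagger)$. The proof then reduces to showing that both $\qval^*_\mathcal{D}$ and $\hat{\qval}^*_\mathcal{D}$ coincide with this unique $\qval^\dagger$.

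First I would dispatch the easy side: $\qval^*_\mathcal{D}$ is by definition the solution of the Bellman optimality equation on the sampled MDP $\mathcal{D}$, so $\qval^*_\mathcal{D} = \bell_\mathcal{D}(\qval^*_\mathcal{D})$ holds, and hence $\qval^*_\mathcal{D} = \qval^\dagger$ by uniqueness. This step is purely bookkeeping and relies only on the definitions introduced in the Notation section.

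The substantive step is to show that the shaped iterate $\hat{\qval}^k$ produced by
\[
\hat{\qval}^{k+1}(s, a) \leftarrow \hat{\qval}^{k}(s, a) + \alpha \hat{\qval}^k_{TD}(s, a) + h(s, a), \quad (s, a, h(s, a)) \in D^k_{LLM},
\]
also has a limit that satisfies the unshaped Bellman equation on $\mathcal{D}$. The key observation I would exploit is that the heuristic term $h(s,a)$ is only injected when the triple $(s,a,h(s,a))$ lies in $D^k_{LLM}$, and the framework intends the LLM guidance to be supplied at a finite (or at worst asymptotically vanishing) subset of iterations. Formally, I would argue that for any fixed $(s,a)$ there exists $K$ such that for all $k \ge K$ the update collapses to the standard TD backup $\hat{\qval}^{k+1}(s,a) \leftarrow \hat{\qval}^{k}(s,a) + \alpha \hat{\qval}^k_{TD}(s,a)$, whose fixed point is exactly $\bell_\mathcal{D}(\hat{\qval}^*_\mathcal{D}) = \hat{\qval}^*_\mathcal{D}$. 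Combining this with the contraction hypothesis gives $\hat{\qval}^*_\mathcal{D} = \qval^\dagger = \qval^*_\mathcal{D}$, which is the desired equality.

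The main obstacle will be making the ``$h$ eventually vanishes'' argument airtight. If one were to allow $h$ to be applied at infinitely many iterations with non-decaying magnitude, the iterates could converge to a shifted fixed point of a modified operator $\bell_\mathcal{D} + h$, in which case the conclusion fails. I would therefore make the scope of $D^k_{LLM}$ explicit — either by assuming the LLM queries terminate after finitely many steps, or by requiring $\sum_k \|h^k\|_\infty < \infty$ so that a Robbins-Monro style perturbation argument applies — and then combine this with the contraction of $\bell_\mathcal{D}$ to show that the perturbed iteration still converges to the unique fixed point $\qval^*_\mathcal{D}$. Everything else is a direct application of Banach's theorem.
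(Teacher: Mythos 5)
Your approach---Banach fixed-point uniqueness for $\bell_\mathcal{D}$, the observation that $\qval^*_\mathcal{D}$ satisfies the Bellman optimality equation by definition, and then an argument that the perturbed iteration converges to the same fixed point---is the natural and correct route, and it is consistent with the paper's informal justification (the text asserts that Q-shaping ``provides heuristic guidance only at specific steps''). The appendix of the paper as given contains no written proof to compare against, so the substantive question is whether your argument closes the theorem as stated, and here your own caveat is the decisive point: the theorem is \emph{not} provable from the stated hypotheses alone. As you observe, if $h$ is injected at infinitely many iterations with non-vanishing magnitude, the iterates track a fixed point of a perturbed operator and the equality $\hat{\qval}^*_\mathcal{D} = \qval^*_\mathcal{D}$ fails; so the assumption you propose (LLM queries terminate after finitely many steps, or $\sum_k \|h^k\|_\infty < \infty$ with a Robbins--Monro perturbation argument) is not an optional strengthening but a missing hypothesis that must be added to the theorem for the conclusion to hold. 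One further detail worth tightening: the paper's TD backup is written as $\hat{\qval}^k_{TD}(s,a) = r(s,a,s') + \gamma\hat{\qval}^k(s,a)$, which bootstraps on the \emph{same} state-action pair rather than on $\max_{a'}\hat{\qval}^k(s',a')$; your proof silently reads it as the standard Bellman optimality backup, which is the only reading under which the limit can equal $\qval^*_\mathcal{D}$, so you should state that correction explicitly rather than inherit the typo. With the finiteness (or summability) assumption made explicit and the backup corrected, your argument is complete.
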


\begin{proof}
See Appendix.
\end{proof}

\subsection{Utilizing Imprecise Q value Estimation}
At the early training stage, the Q-values for different actions are nearly identical, leading the policy to execute actions randomly.To address this, we leverage the LLM's domain knowledge to provide positive Q-values for actions that contribute to task success and negative Q-values for actions that do not.
The imprecise Q-values provided by the LLM can be categorized into two types: overestimations and underestimations

\paragraph{Underestimation of Non-Optimal Actions}
An agent does not need to fully traverse the entire state-action space to identify the optimal trajectory that leads to task success. Therefore, imprecise Q-value estimation can be effectively utilized to guide the agent's exploration.

For instance, consider a scenario where the agent is required to control a robot arm to operate on a drawer located in front of it. In this case, actions such as moving the arm backward or upward are evidently unhelpful in finding the optimal trajectory. Assigning very low Q-values to these non-contributory actions discourages the agent from exploring them, thereby enhancing sample efficiency.

\begin{wrapfigure}{r}{0.58\linewidth}
\vspace{-10pt} 
\vspace{-10pt} 
\begin{minipage}{0.58\textwidth}
\begin{algorithm}[H]
\caption{Q-shaping}
\label{algo:q-shaping}
\begin{algorithmic}[1]
\small
\STATE \textbf{Require}: Good Q-set $G_{llm}$, Bad Q-set $B_{llm}$ provided by the LLM, RL solver $\mathcal{A}$
\STATE \textbf{Goal}: Compute the average performance over 10 runs
\STATE \textbf{Initialize}: Start 20 agents $\{\text{Agent}_1, \text{Agent}_2, \dots, \text{Agent}_{20}\}$
\STATE \textcolor{gray}{\texttt{\# for each agent, do:}}
\STATE agent.explore(steps = 5000)
\STATE \textcolor{gray}{\texttt{\# Apply Q-shaping and Policy-shaping}}
\STATE agent.q\_shaping($G_{llm}$, $B_{llm}$)
\STATE agent.policy\_shaping($G_{llm}$, $B_{llm}$)
\STATE \textcolor{gray}{\texttt{\# Further exploration}}
\STATE agent.explore(steps = 10000)
\STATE \textcolor{gray}{\texttt{\# Synchronize agents}}
\STATE agent.wait()
\STATE \textcolor{gray}{\texttt{\# Remove 10 lower-performing agents}}
\STATE agent.remove\_if\_latter()
\STATE \textcolor{gray}{\texttt{\# Continued exploration and training}}
\STATE agent.explore\_and\_train()
\STATE \textbf{Output}: Average performance over 10 runs
\end{algorithmic}
\end{algorithm}
\end{minipage}
\vspace{-10pt} 
\vspace{-10pt} 

\end{wrapfigure}

\paragraph{Overestimation of Near-Optimal Actions}
At the initial training phase (iteration step \( k = 0 \)), let action \( a \) be assumed to have the highest estimated Q-value for a given state \( s \), while \( a^* \) denotes the true optimal action. This assumption leads to the inequality \( \hat{\qval}(s, a^*) < \hat{\qval}(s, a) < \qval^*(s, a^*) \). Consequently, the agent is predisposed to explore actions around the suboptimal \( a \) in its search for states, given that \( \mu(s) = \max_a \hat{\qval}(s, \cdot) + \epsilon \), where \( \epsilon \sim \mathcal{N}(0, \delta^2) \) .

However, the number of steps required to discover the optimal action \(a^*\) is inherently constrained by the environment and the distance between \(a\) and \(a^*\). To expedite this exploration process, we introduce an action \(a_{LLM}\) suggested by the LLM, replacing \(a\) via Q-shaping guided by the loss function in Equation \ref{eq:q-shaping-loss} to enhance sample efficiency. Given the assumption \(|a_{LLM} - a^*| < |a - a^*| < \delta\), we can express \(\mu(s) = a_{LLM} + \epsilon\). Consequently, the agent has a higher chance of selecting $a^*$, significantly improving the likelihood of identifying the optimal trajectory.

In conclusion, by letting the LLM provide the goodQ set and badQ set, the agent is guided to prioritize exploring actions suggested by the LLM, thereby enhancing sample efficiency. Over time, as indicated by \cite{NIPS2010_091d584f,td3} and Theorem \ref{thm:qhat_is_contraction}, \(\hat{\qval}\) converges towards the locally optimal Q-function. We now present the theoretical upper bound on the sample complexity required for \(\hat{\qval}\) to converge to \(\qval^*_\mathcal{D}\) for any given MDP \(\mathcal{D}\):

\begin{theorem}[Convergence Sample Complexity]
\label{thm:sample complexity}
The sample complexity $n$ required for $\hat{\qval}$ to converge to the local optimal fixed-point $\qval^*_D$ with probability $1-\delta$ is:
$$n > \mathcal{O}\left( \frac{|S|^2}{2\epsilon^2}\ln{\frac{2|S \times A|}{\delta}} \right)$$

\end{theorem}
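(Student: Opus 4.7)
The plan is to combine a concentration-of-measure argument for the empirical reward and transition estimates with the contraction property of $\bell_{\mathcal{D}}$ established in Theorem \ref{thm:qhat_is_contraction}. Since the reward is supported in $[0,1]$ and each next-state indicator is Bernoulli, Hoeffding's inequality supplies per-coordinate tail bounds; a union bound over $\sspace \times \aspace$ then gives simultaneous control, and the contraction converts one-step estimation error into fixed-point error.

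First, I would fix a state-action pair $\langle s,a \rangle$ and view the $n$ sampled tuples associated with it as i.i.d.\ realizations of the reward and of the indicators $\mathbbm{1}[s' = s'']$ for each candidate next state $s''$. By Hoeffding,
\[
\Pr\bigl( |\hat{r}(s,a) - r(s,a)| > \epsilon'\bigr) \le 2\exp\bigl( -2n\epsilon'^2 \bigr),
\]
with the analogous bound for each empirical transition probability $\hat{P}(s'' \mid s,a)$. To turn this pointwise accuracy into $\ell_1$-accuracy of $\hat{P}(\cdot \mid s,a)$, which is the norm needed to control a Bellman backup applied to a bounded value vector, I would set $\epsilon' = \epsilon / |\sspace|$ so that summing coordinate deviations yields a total-variation deviation of at most $\epsilon$.

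Next, I would take a union bound over all $|\sspace \times \aspace|$ state-action pairs (absorbing the reward and transition coordinates into the same constant), giving
\[
\Pr\bigl( \text{some coordinate fails} \bigr) \le 2 |\sspace \times \aspace| \exp\!\left( -\tfrac{2n\epsilon^2}{|\sspace|^2} \right).
\]
Setting this at most $\delta$ and inverting reproduces exactly the stated complexity $n > \tfrac{|\sspace|^2}{2\epsilon^2}\ln\tfrac{2|\sspace \times \aspace|}{\delta}$. Finally, I would invoke Theorem \ref{thm:qhat_is_contraction}: on the event that the empirical Bellman operator $\bell_{\mathcal{D}}$ is uniformly $\epsilon$-close to its population counterpart, its $\gamma$-contraction in $\|\cdot\|_\infty$ propagates the one-step error to the fixed points, so that $\|\hat{\qval}_{\mathcal{D}}^{*} - \qval_{\mathcal{D}}^{*}\|_\infty \le \epsilon / (1-\gamma)$ with probability at least $1-\delta$.

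The main obstacle will be the second step: Hoeffding only controls pointwise deviations of next-state probabilities, whereas the Bellman backup pairs $\hat{P}(\cdot \mid s,a)$ against an entire value vector, so the conversion from $\ell_\infty$ to $\ell_1$ accuracy is what forces the $|\sspace|^2$ factor in the final bound. Being careful about this conversion, and about folding the contraction factor $\gamma$ into the definition of the target accuracy so that the final error on $\hat{\qval}_{\mathcal{D}}^{*}$ is the $\epsilon$ appearing in the statement, is what makes the complexity match the claimed expression.
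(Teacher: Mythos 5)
The paper's appendix as given contains no proof of this theorem, so there is nothing to compare your argument against; judged on its own, your proposal is the standard model-based route (per-coordinate Hoeffding, an $\ell_\infty$-to-$\ell_1$ conversion at accuracy $\epsilon/|\sspace|$, a union bound, and error propagation through the $\gamma$-contraction), and it does reproduce the algebraic form $\frac{|\sspace|^2}{2\epsilon^2}\ln\frac{2|\sspace\times\aspace|}{\delta}$. Two quantitative points are glossed over, though both are absorbed by the $\mathcal{O}(\cdot)$: the union bound must cover all $|\sspace|$ next-state coordinates of $\hat{P}(\cdot\mid s,a)$ for each pair, plus the reward, so the logarithm should read $\ln\frac{2(|\sspace|+1)|\sspace\times\aspace|}{\delta}$ rather than $\ln\frac{2|\sspace\times\aspace|}{\delta}$; and the backup pairs $\hat{P}(\cdot\mid s,a)$ against a value vector of norm up to $1/(1-\gamma)$, after which the fixed-point comparison contributes another $1/(1-\gamma)$, so the true dependence carries a hidden $1/(1-\gamma)^2$ (or requires redefining the target accuracy), which the stated bound suppresses entirely.

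The more substantive problem is one of interpretation, and you have silently resolved it by proving a different statement than the one written. The theorem asks for the sample complexity for $\hat{\qval}$ to converge to $\qval^*_D$, the fixed point of the \emph{empirical} operator $\bell_{\mathcal{D}}$. By Theorem~\ref{thm:qhat_is_contraction} that convergence is a deterministic consequence of the contraction property and involves no randomness and no sample size at all: iterating $\bell_{\mathcal{D}}$ reaches its own fixed point regardless of $n$, so a Hoeffding/union-bound argument has nothing to attach to. Your concentration argument instead controls the distance between the empirical fixed point and the \emph{population} optimal Q-function, i.e.\ $\|\qval^*_{\mathcal{D}} - \qval^*\|_\infty$, which is the only reading under which the factors $|\sspace|^2$ and $\ln(2|\sspace\times\aspace|/\delta)$ are meaningful. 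That is almost certainly what the theorem intends, but you should state explicitly that you are bounding the deviation of the local fixed point from the true one (a simulation-lemma statement), not the number of samples needed for the shaped iteration to reach its own fixed point; as literally written, the theorem conflates these two and your proof only addresses the former.
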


\begin{proof}
See proof at appendix.
\end{proof}

Theorem \ref{thm:sample complexity} establishes an upper bound on the sample complexity, indicating that the imprecise Q-values provided by the LLM will be corrected within a finite number of steps. Therefore, any heuristic values can be introduced during the early training iterations, and the Q-shaping framework will adapt to inaccurate Q-values over time.

\subsection{Algorithm Implementation}
For the implementation of Q-shaping, we employ TD3~\citep{td3} as the RL solver (backbone) and GPT-4o as the heuristic provider, introducing three additional training phases: (1) Q-Network Shaping (2) Policy-Network Shaping, and (3) High-performance agent selection. Pseudo-code \ref{algo:q-shaping} outlines the detailed steps of the Q-shaping framework.

\paragraph{Q-Network Shaping} In the Q-shaping framework, the LLM is tasked with providing a set of \((s, a, Q)\) pairs to guide exploration. This approach is particularly crucial during the early training stage when it is challenging for the agent to independently discover expert trajectories. Traditional RL solvers often require a substantial number of steps to identify the correct path to success, leading to sample inefficiency. The goal of the Q-shaping framework is to leverage the provided \((s, a, Q)\) pairs to accelerate exploration and help the agent quickly identify the optimal path.

\begin{figure}[h]
    \centering
    \includegraphics[width=\linewidth]{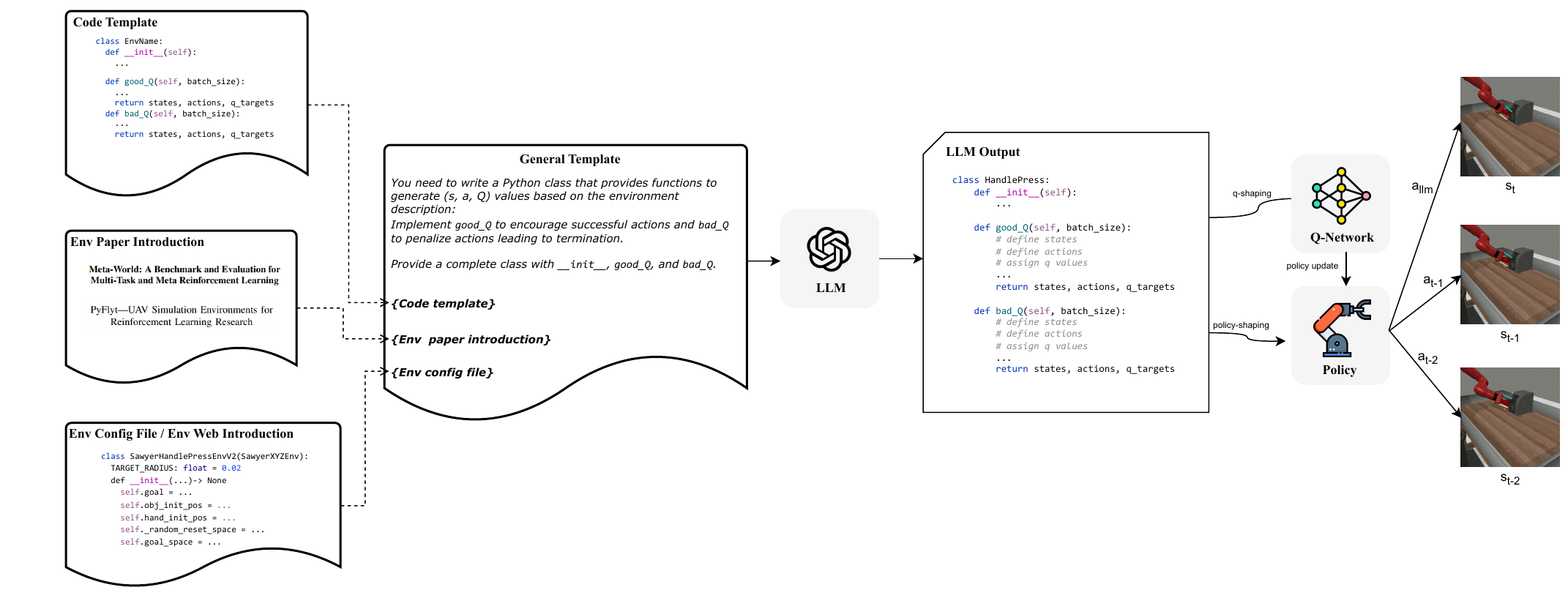}
    \caption{Q-shaping prompt. There is a general code template that specifies the required structure for the generated code. In addition to the template, three key pieces of information are necessary to generate an effective heuristic function: the code template, an introduction to the environment provided in the paper, and the environment configuration file.}
    \vspace{-5pt}
    \label{fig:q_shaping_prompt}
\end{figure}

To obtain \( D_{LLM} \), we construct a general code template as the prompt as illustrated in Figure \ref{fig:q_shaping_prompt}, supplemented by task-specific environment configuration files and a detailed definition of the observation and action spaces within the simulator. Subsequently, we apply the loss function \( L_{q-shaping} \) to update the Q-function:

\begin{equation}
    \label{eq:q-shaping-loss}
     L_{q-shaping}(\theta) = E_{(s_i,a_i,Q_i) \sim D_g}{(Q_i - \hat{\qval}_{\theta}(s_i,a_i))^2}
\end{equation}

\paragraph{Policy-Network Shaping}

In most reinforcement learning (RL) algorithms, the policy is derived from the Q-function, where the policy is optimized to execute actions that maximize the Q-value given a state. The policy update is expressed as:$\mu(s) = \arg\max_a \hat{\qval}(s, \cdot)$

While introducing a learning rate and target policy can help stabilize the training process and prevent fluctuations in the policy network, this approach often slows down the convergence speed. To accelerate this adaptation, we introduce a "Policy-Network Shaping" phase designed to allow the policy to quickly align with the good actions and avoid the bad actions provided by the LLM.

The shaping loss function is defined as:
\begin{equation}
    L_{policy-shaping} = \lambda_1 \mathbb{E}_{(s, a) \sim G_{LLM}} \left[ \|\mu(s) - a \|^2 \right] - \lambda_2 \mathbb{E}_{(s, a) \sim B_{LLM}} \left[ \|\mu(s) - a \|^2 \right]
\end{equation}
where \((s, a) \sim G_{LLM}\) and \((s, a) \sim B_{LLM}\) represent state-action pairs sampled from the LLM-provided \textit{goodQ} and \textit{badQ} sets, respectively, and \(\lambda_1\) and \(\lambda_2\) are hyperparameters controlling the influence of the LLM-guided shaping.

With this "Policy-Network Shaping" phase, researchers can quickly observe the impact of heuristic values, facilitating the rapid evolution of heuristic quality, ultimately leading to a more efficient exploration process and faster convergence to optimal behavior.

\paragraph{High-Performance Agent Selection}

With Q-network shaping and policy-network shaping, the Q-shaping framework enables a more rapid verification of the quality of provided heuristic values compared to traditional reward shaping. This allows the experimenter to selectively retain high-performing agents for further training while discarding those that underperform. As outlined in Algorithm \ref{algo:q-shaping}, following the shaping of the policy and Q-values, each agent is allowed 10,000 steps to explore. After this exploration phase, weaker agents are removed, and only the top-performing agent continues with the training process.

\section{Experiment Settings}\label{sec:experiment}

We investigate the following \textbf{hypotheses} through a series of four experiments:
\begin{enumerate}
    \item Can Q-shaping enhance sample efficiency in reinforcement learning?
    \item Can Q-shaping adapt to incorrect or hallucinated heuristics while maintaining optimality?
    \item Does Q-shaping outperform LLM-based reward shaping methods?
    \item Can LLM design heuristic functions that provide s,a,Q altogether?
\end{enumerate}

\begin{figure}[h]
    \centering
    \includegraphics[width=\linewidth]{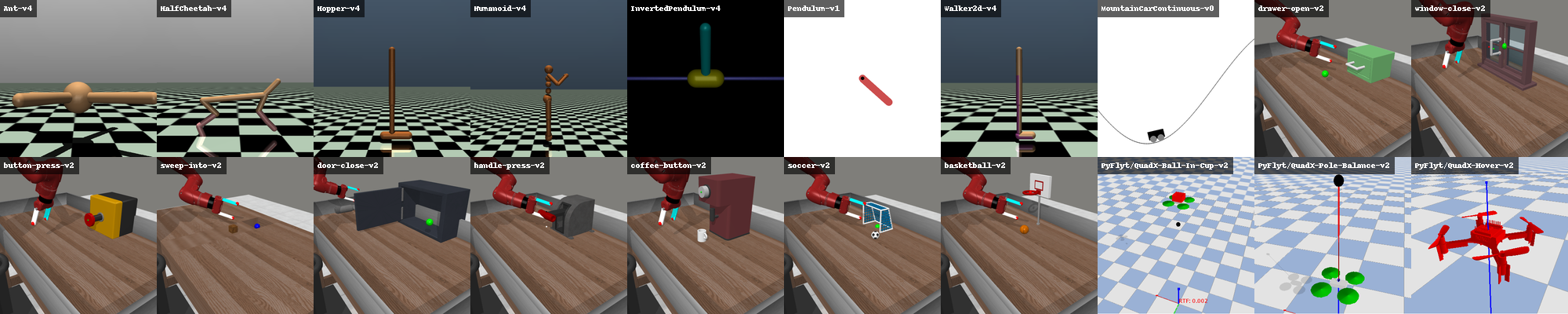}
    \caption{Evaluation Environments}
    \vspace{-5pt}
    \label{fig:environments}
\end{figure}

To validate these hypotheses, we conducted three primary experiments and one ablation study. GPT-4o served as the heuristic provider, while TD3 was employed as the reinforcement learning (RL) backbone, forming \textbf{LLM-TD3}. As illustrated in Figure \ref{fig:environments}, Q-shaping and various baseline methods were evaluated across 20 distinct tasks involving drones, robotic arms, and other robotic control challenges. Below, we describe the specific experiments and their objectives:

\begin{enumerate}
    \item \textbf{Sample Efficiency Experiment:} We compare Q-shaping with four baseline methods to assess its impact on sample efficiency.
    \item \textbf{Comparison with LLM-based Reward Shaping:} Q-shaping, which integrates domain knowledge to assist in agent training, is compared with Text2Reward and Eureka to evaluate its performance relative to existing LLM-based reward shaping approaches.
    \item \textbf{LLM Quality Evaluation:} Although Q-shaping guarantees optimality, its reliance on LLM-provided heuristics may influence performance. This experiment evaluates the quality of different LLM outputs.
    \item \textbf{Ablation Study on Q-shaping phases:} Q-shaping introduces three key training phases. This experiment isolates and examines the contribution of each phase to overall performance.
\end{enumerate}

\begin{figure}[h]
    \centering
    \includegraphics[width=\linewidth]{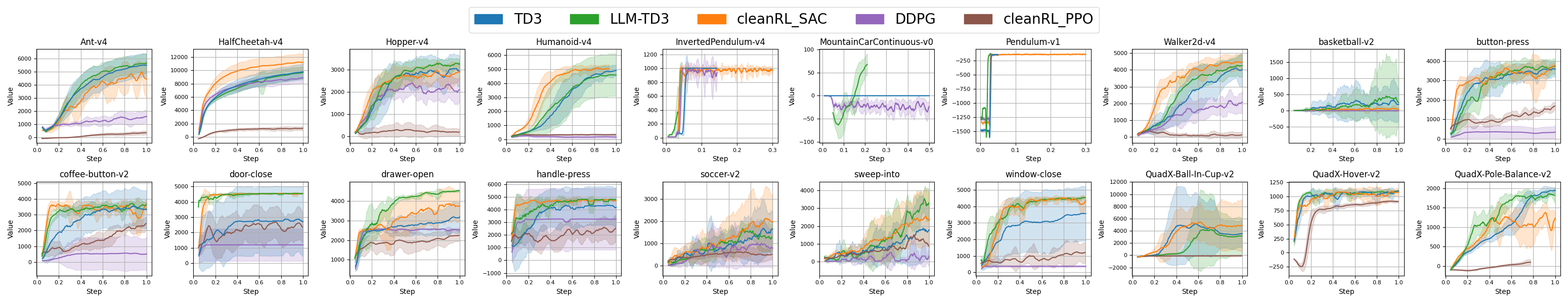}
    \caption{Learning curve comparison of each algorithm across 20 tasks.}

    \vspace{-5pt}
    \label{fig:sample_efficiency_learning_curve}
    \vspace{-5pt}
\end{figure}

\paragraph{Environments} We evaluate Q-shaping across 20 distinct environments, including 8 from Gymnasium Classic Control and MuJoCo~\citep{todorov2012mujoco}, 9 from MetaWorld~\citep{yu2020meta_world}, and 3 from PyFlyt~\citep{tai2023pyflyt}. The environments span a range of robot types, from simple pendulum systems to humanoid control. Notably, the robot arm and drone environments used are less commonly studied, making it unlikely that the LLM was pretrained on these specific environments.

\paragraph{Baselines} For the sample efficiency experiments, we compared Q-shaping against several baseline algorithms, including CleanRL-PPO, CleanRL-SAC~\citep{huang2022cleanrl}, DDPG~\citep{ddpg}, and TD3~\citep{td3}. When evaluating Q-shaping against other reward shaping methods, we selected Text2Reward and Eureka as baselines. In the LLM-type ablation study, we assessed the performance of different LLMs: O1-Preview, GPT-4o-Mini, Gemini-1.5-Flash~\citep{team2023gemini}, DeepSeek-V2~\citep{deepseekai2024deepseekv2strongeconomicalefficient}, and Yi-Large~\citep{ai2024yi_large}.

For the reward shaping method comparison, we implemented Eureka and Text2Reward~\citep{xietext2reward}. Specifically, for the MetaWorld tasks using Eureka, we set $K=16$ and limited the evolution round to 1 due to the long verification cycle of Eureka.

\paragraph{Metrics} To evaluate sample efficiency, we measure the number of steps required to reach 80\% of peak performance, where peak performance is defined as the highest performance achieved by any baseline agent. For clarity in visualization, improvements exceeding 150\% are truncated to 150\%. Each algorithm is tested 10 times, and the average evaluation performance is reported. Evaluations are conducted at intervals of 5,000 steps. During each evaluation, the agent will be tested over 10 episodes, and the average episodic return will be plotted to form the learning curve.

\section{Results and Analysis}\label{sec:results}

\vspace{-5pt}
\begin{figure}[h]
\vspace{-10pt}
    \centering
    \includegraphics[width=\linewidth]{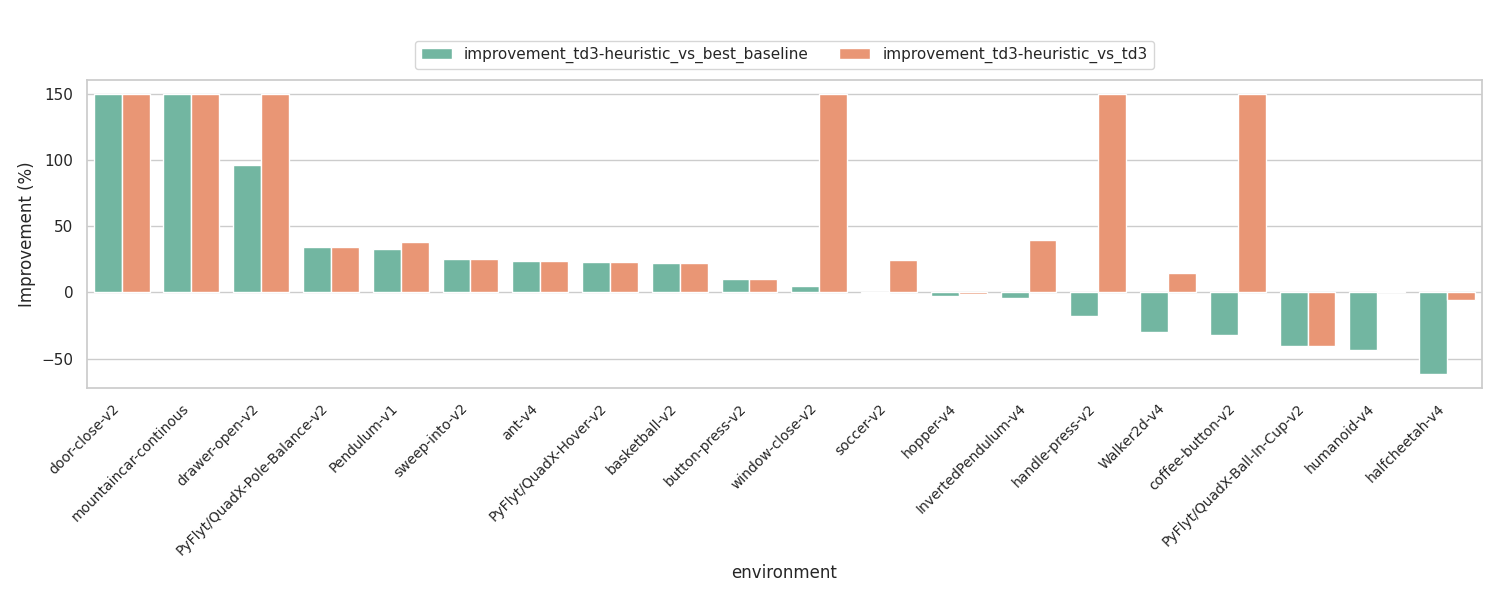}
    \caption{Q-shaping improvement over the best baseline in each environment and its improvement over TD3.}
    \vspace{-10pt}
    \label{fig:sample_efficiency_bar}
\end{figure}
\vspace{-5pt}

\paragraph{Q-Shaping Outperforms Best Baseline by an Average of 16.87\% Across 20 Tasks}
As shown in Figure \ref{fig:sample_efficiency_bar} and Figure \ref{fig:sample_efficiency_learning_curve}, Q-shaping demonstrated a notable improvement over both the best baseline and TD3 across 20 tasks. On average, Q-shaping improves performance by 16.87\% compared to the best baseline and by 55.39\% compared to TD3, highlighting its effectiveness in enhancing sample efficiency and task performance. This supports \textbf{H1}.

\vspace{-5pt}
\begin{figure}[h]
    \centering
    \includegraphics[width=\linewidth]{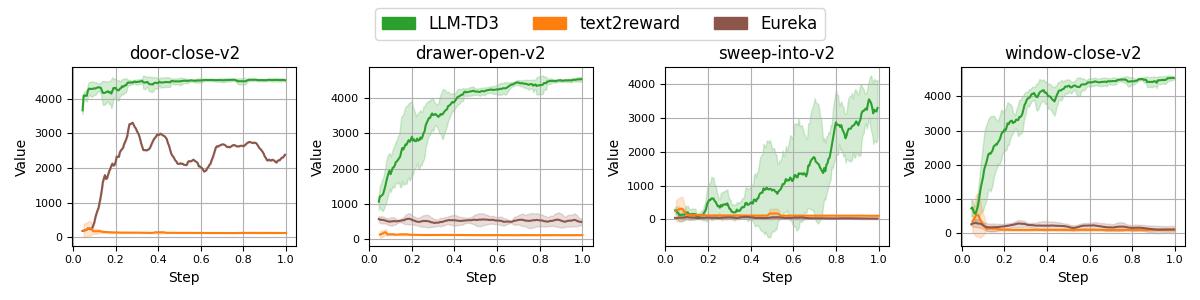}
    \caption{Learning curve comparison between Q-shaping and LLM-based reward shaping methods.}

    \vspace{-10pt}
    \label{fig:rewardshpaing}
\end{figure}
\vspace{-5pt}

\paragraph{Q-Shaping Outperforms LLM-Based Reward Shaping Methods by 253.80\%}
We evaluated Q-shaping and baseline methods on four Meta-World environments: \textit{door-close}, \textit{drawer-open}, \textit{window-close}, and \textit{sweep-into}. Using peak performance as the basis for comparison, Q-shaping achieved substantial improvements over both the Eureka and T2R baselines according to Figure \ref{fig:rewardshpaing}.

Compared to the best baseline, LLM-TD3 improved by 38.68\% in the \textit{door-close} task, 406.04\% in \textit{drawer-open}, 389.77\% in \textit{window-close}, and 180.70\% in \textit{sweep-into}, resulting in an average peak performance improvement of 253.80\%.

Though LLM-based reward shaping methods can improve task success rates~\citep{ma2023eureka,xietext2reward}, they often sacrifice optimality by modifying the original MDP. In contrast, Q-shaping achieves superior performance, retaining both success and optimality, with a 253.80\% improvement over the best LLM-based reward shaping methods.This supports \textbf{H2} and \textbf{H3}.

\paragraph{Most LLMs Can Provide Correct Heuristic Functions}

\begin{wraptable}{r}{0.6\textwidth} 
\vspace{-10pt} 
\captionsetup{justification=raggedright} 
\centering 
\caption{Evaluation of LLM Quality in Outputting Heuristic Values}
\label{llm-quality-evaluation}
\resizebox{\linewidth}{!}{ 
\begin{tabular}{@{}lccccc@{}}
\toprule
\textbf{Metric} & \textbf{o1-Preview} & \textbf{GPT-4o} & \textbf{Gemini} & \textbf{DeepSeek-V2.5} & \textbf{yi-large} \\ 
\midrule
Template Adherence (\%) & 100.0 & 100.0 & 40.0 & 100.0& 100.0 \\
Correct Q-values (\%)   & 100.0 & 100.0 & 60.0 & 100.0& 100.0 \\
Correct State-Action Dim (\%) & 100.0 & 100.0 & 80.0 & 100.0& 100.0 \\
Code Completeness (\%)  & 100.0 & 100.0 & 20.0 & 100.0& 100.0 \\
Bug-Free (\%)           & 100.0 & 100.0 & 20.0 & 100.0& 100.0 \\
Average (\%)            & 100.0 & 100.0 & 44.0 & 100.0& 100.0 \\
\bottomrule
\end{tabular}
\vspace{-10pt} 
\vspace{-10pt} 
\vspace{-10pt} 
\vspace{-10pt} 
}
\end{wraptable}

We evaluated the quality of LLM-generated heuristic functions from five perspectives: (1) adherence to the required code template, (2) correctness of the assigned Q-values, (3) accuracy of the state-action dimension, (4) completeness of the generated code, and (5) presence of bugs in the generated code. Each LLM was prompted 10 times with the same request, and we quantified their performance using a correctness rate across these metrics.

The results, as shown in Table \ref{llm-quality-evaluation}, indicate that most LLMs, including o1-Preview, GPT-4o, DeepSeek-V2.5, and yi-large, provided correct heuristic functions with a 100\% success rate across all evaluation metrics. However, Gemini exhibited poorer performance, achieving only 44\% on average. This supports \textbf{H4}.

\paragraph{Ablation Study on Additional Training Phases}
We conducted an ablation study to evaluate the impact of three key training phases: (1) Q-Network Shaping, (2) Policy-Network shaping, and (3) agent selection, across four Meta-World environments: \textit{door-close}, \textit{drawer-open}, \textit{window-close}, and \textit{sweep-into}. The effectiveness of each phase was measured by convergence steps, with algorithms marked as "Failed" if they did not reach the convergence threshold within $10^6$ steps. The study aimed to assess how each phase contributes to improving sample efficiency.

As shown in Table \ref{ablation-study}, each training phase significantly enhances sample efficiency. Q-Network shaping and policy-network shaping together result in substantial performance gains for TD3. Additionally, the agent selection phase helps by eliminating agents that fail to explore effective trajectories in the early stages of training, providing a slight improvement in average sample efficiency.

\begin{table}[h]
\vspace{-2pt} 
\caption{Ablation Study on Additional Training Phases}
\vspace{-2pt} 
\label{ablation-study}
\centering
\begin{tabularx}{1\textwidth}{@{}ccXXXX@{}}
\toprule
\multicolumn{2}{c}{Phase}  & \multicolumn{4}{c}{Environment}  \\ 
\cmidrule(r){1-2} \cmidrule(lr){3-6}
{\small (Q,Policy)-shaping} & {\small Selection}  & {\scriptsize door-close-v2} & {\scriptsize drawer-open-v2} & {\scriptsize sweep-into-v2} & {\scriptsize window-close-v2} \\ \midrule

$\times$ & $\times$   & Failed & Failed & Failed & 759999  \\
\checkmark & $\times$  & 30000 & 275000 & 860000 & 365000  \\
\checkmark & \checkmark & 25000 & 265000 & 790000 & 215000  \\
\bottomrule
\end{tabularx}
\vspace{-10pt} 
\end{table}

\section{Conclusion}\label{sec:conclusion}

We propose Q-shaping, an alternative framework that integrates domain knowledge to enhance sample efficiency in reinforcement learning. In contrast to traditional reward shaping, Q-shaping offers two key advantages: (1) it preserves optimality, and (2) it allows for rapid verification of the agent's behavior. These features enable experimenters or LLMs to iteratively refine the quality of heuristic values without concern for the potential negative impact of poorly designed heuristics. Experimental results demonstrate that Q-shaping significantly improves sample efficiency and outperforms LLM-guided reward shaping methods across various tasks.

We hope this work encourages further research into advanced techniques that leverage LLM outputs to guide and enhance the search process in reinforcement learning.

\newpage
\bibliography{xxxx_conference}
\bibliographystyle{xxxx_conference}

\appendix

\end{document}